\documentclass[mnsc,nonblindrev]{informs3_hide}
\OneAndAHalfSpacedXI 

\usepackage[english]{babel}
\usepackage[autostyle, english = american]{csquotes}
\MakeOuterQuote{"}
\usepackage[colorlinks,linkcolor=blue, citecolor=blue]{hyperref}
\usepackage[normalem]{ulem}

\usepackage{cleveref}
\crefname{subsection}{section}{subsections}
\usepackage{eqnarray}
\usepackage{algorithm,algpseudocode}
\usepackage{amsmath, bbm, enumitem, xparse, bm, lipsum}
\usepackage{amssymb}
\usepackage{comment}
\usepackage{mathtools}
\usepackage{subcaption}
\usepackage{booktabs}

\newcommand{\ALG}{\mathsf{ALG}}
\newcommand{\OPT}{\mathsf{OPT}}

\newcommand{\Reg}{\operatorname{Reg}}

\Crefname{assumption}{Assumption}{Assumptions}
\crefname{assumption}{assumption}{assumptions}
\newcommand{\Nmax}{N_{\max}}

\NewDocumentEnvironment{myproof}{o}
{\IfNoValueTF{#1}{\paragraph{{Proof.} }} {\paragraph{{#1.} }} }
{\hfill$\qed$}

\usepackage[dvipsnames]{xcolor}

\usepackage{xparse}

\usepackage{natbib,bbm,multirow,multicol}
 \bibpunct[, ]{(}{)}{,}{a}{}{,}%
 \def\bibfont{\small}%
\TheoremsNumberedThrough     
\ECRepeatTheorems

\EquationsNumberedThrough    

\allowdisplaybreaks

\begin{document}


\RUNAUTHOR{Zong, Wang, Jiang}

\RUNTITLE{Adaptive Rollout Optimization}

\TITLE{
Cross-Epoch Adaptive Rollout Optimization for RL Post-Training
}

\ARTICLEAUTHORS{%
\AUTHOR{Yiming Zong,\quad Yige Wang, \quad Jiashuo Jiang}

\AFF{\  \\
Department of Industrial Engineering \& Decision Analytics, Hong Kong University of Science and Technology
}
}

\ABSTRACT{
LLM post-training often relies on reinforcement learning methods that sample multiple rollouts per prompt, yet most existing approaches use a fixed rollout budget for every prompt, despite large differences in the training signal different prompts provide. In this paper, we study adaptive rollout allocation under a fixed global budget and formulate the problem as online resource allocation with prompt-level diminishing returns. Our method, \textsc{CERO}, maintains a Beta posterior over each prompt’s success probability and uses the posterior expected Bernoulli variance as a Bayesian estimate of the value of additional rollouts. We use this estimate to construct a concave, saturating utility over cumulative allocations, yielding an objective in which decisions across prompts and epochs are coupled by the global budget. Since the resulting objective is temporally nonseparable, we derive a Fenchel-dual reformulation and update both prompt-level and budget-level dual variables via projected online gradient descent. Under fixed prompt utilities, we prove an $O(\sqrt K)$ regret bound against the offline allocation benchmark. Experiments on mathematical-reasoning problems show that CERO consistently outperforms GRPO across multiple open-weight LLMs and benchmarks, demonstrating that adaptive rollout budgeting can improve sample efficiency.
}

\KEYWORDS{Reinforcement Learning, LLM Post-training, Online Resource Allocation, Rollout Optimization}


\maketitle

\section{Introduction}
Reinforcement learning (RL) post-training has become a central ingredient for improving the reasoning ability of large language models (LLMs). In widely used algorithms such as GRPO \citep{shao2024deepseekmath}, each prompt is typically assigned a fixed number of rollouts, which are grouped to construct relative reward signals for policy optimization. While simple and scalable, this uniform allocation strategy ignores substantial heterogeneity across prompts. Some prompts are already solved and yield little reward variation; others are too difficult and provide weak or noisy signals; only a subset of prompts produce useful success--failure variation that can drive meaningful policy updates. As a result, a large fraction of the rollout budget may be spent on prompts with limited marginal value.

Recent rollout-optimization methods address this inefficiency through prompt selection \citep{yu2025dapo}, within-batch reallocation \citep{li2025knapsack}, or post-generation filtering and pruning. These methods improve training efficiency, but they primarily make local decisions within a batch or after rollouts have already been generated. By contrast, rollout budgeting in RL post-training naturally spans multiple epochs: decisions made early in training affect the remaining budget, the evidence collected for each prompt, and the opportunities for future updates. This motivates a global question: under a fixed rollout budget, how should we allocate rollouts across both prompts and epochs to maximize the utility of collected samples?

We formulate this problem as budgeted online optimization with prompt-level diminishing returns. An effective allocation rule should favor prompts with high expected learning value, while avoiding excessive sampling of the same prompt as evidence accumulates. To this end, we propose \textsc{CERO}, a cross-epoch online adaptive rollout optimization framework. To the best of our knowledge, \textsc{CERO} is the first rollout-allocation framework for LLM post-training that explicitly optimizes a global rollout budget across epochs rather than only reallocating or filtering rollouts within a batch.

\textsc{CERO} maintains a Beta posterior over each prompt's latent success probability and uses the posterior expected Bernoulli reward variance to estimate the value of additional rollouts. This estimate induces a concave, saturating utility over cumulative prompt-level allocations, capturing diminishing returns as more rollouts are assigned to the same prompt. Since this utility depends on cumulative allocations, the resulting objective is temporally non-separable under the global budget. We then derive a Fenchel-dual reformulation of the budgeted allocation problem and obtain an online primal--dual algorithm with prompt-level dual variables and a global budget multiplier. The prompt-level variables estimate the marginal value of additional rollouts, while the global multiplier acts as a dynamic budget price coordinating rollout consumption across epochs.

The resulting algorithm is simple to integrate with existing RL post-training pipelines. At each epoch, \textsc{CERO} allocates rollouts to a prompt when its estimated marginal value exceeds the current global budget price, updates the prompt posterior using observed rewards, and refreshes the plug-in value estimate for future epochs. It acts as a drop-in adaptive data-collection layer on top of GRPO-style policy optimization: it changes which prompts receive rollouts and how many rollouts they receive, but leaves the underlying policy-gradient objective unchanged.

Our contributions are threefold. First, we formulate adaptive rollout allocation for LLM RL post-training as a cross-epoch fixed-budget online optimization problem with prompt-specific diminishing-return utilities. This formulation explicitly couples rollout decisions across the training horizon through a shared global budget, distinguishing it from prior within-batch selection or post-generation filtering methods. Second, we propose \textsc{CERO}, a Bayesian primal--dual allocation algorithm that uses posterior estimates of prompt-level rollout value to guide budgeting across epochs. Third, we provide both theoretical and empirical evidence for the effectiveness of \textsc{CERO}: under fixed utilities induced by prompt-level value estimates, we establish an $O(\sqrt{K})$ regret guarantee against the offline allocation benchmark. Experiments on mathematical-reasoning problems demonstrate that \textsc{CERO} outperforms vanilla GRPO across multiple open-weight LLMs and benchmarks.

\subsection{Related Works}
\paragraph{RL Rollout Optimization} A surge of research has recently emerged focusing on RL rollout optimization. These works predominantly involve modifications to GRPO \citep{shao2024deepseekmath} and can be broadly categorized into three perspectives. The first line focuses on online curriculum learning and sample selection, where training examples are prioritized according to their estimated learning value \citep{mahrooghi2026goldilocks,zhang2025speed,chen2025self,hu2025vade}.
The second line investigates adaptive rollout budget allocation \citep{li2025knapsack,yao2026coba}, which replace uniform rollout assignment with value-aware resource allocation under a fixed total budget. The third line emphasizes post-rollout filtering or pruning, such as AERO \citep{zhang2026train}, CPPO \citep{lin2025cppo}, PODS \citep{xu2025not}, and GFPO \citep{shrivastava2025sample}, which reduce update cost by rescuing, smoothing, or sub-selecting rollout groups after generation. Overall, prior work suggests that rollout efficiency depends critically on identifying high-value queries and avoiding degenerate all-correct or all-incorrect groups.

\paragraph{Online Resource Allocation} Online resource allocation is a central topic in the online optimization literature and has been applied in many operational settings, including ad assignment \citep{mehta2007adwords}, revenue management \citep{talluri2006theory,jasin2012re, jiang2025degeneracy}, online knapsack problems \citep{arlotto2019uniformly, jiang2020onlineRA, liu2022non}, and online packing/covering problems \citep{buchbinder2005online,buchbinder2006improved,feldman2010online}. The existing algorithmic literature can be broadly organized according to how it exploits dual information. A dominant class of methods uses dual prices as signals for making online allocation decisions. Within this class, some papers adopt a learn-once strategy: they reserve an initial set of arrivals for estimation, compute a price vector from that sample, and then use the same prices for all future decisions \citep{devanur2009adwords,feldman2010online,molinaro2014geometry,devanur2012online, zong2026online}. Other papers advocate adaptive updating, where the pricing rule is revised at multiple points over time through repeated re-solving, leading to stronger robustness and improved horizon dependence \citep{agrawal2014dynamic,li2022online,chen2015dynamic}. In parallel, other works depart from this paradigm by developing primal-oriented algorithms \citep{kesselheim2014primal} or first-order schemes that do not require repeated computation of dual prices \citep{agrawal2014fast,balseiro2020dual}.

\section{Problem Formulation}
We focus on the LLM post-training stage and study the optimization of rollout allocation. Consider a training procedure with \(K\) epochs and a training dataset of \(M\) prompts, \(\mathcal X=\{x_1,\ldots,x_M\}\). At each epoch \(k\), each prompt \(x_i\) becomes available once, and the current policy generates multiple rollouts conditioned on \(x_i\) for policy optimization.
A standard approach, such as GRPO, allocates a fixed number \(N\) of rollouts to each prompt in each epoch, yielding a total rollout budget \(B=KMN\). 

In this work, we consider an adaptive rollout allocation strategy. Let
\[
\mathbf N_{\mathcal X}=(\bm N_1,\ldots,\bm N_M)\in\mathbb Z_{\ge0}^{M\times K},
\quad
\bm N_i=(N_{i,1},\ldots,N_{i,K})\in\mathbb Z_{\ge0}^{K},
\]
denote the allocation over the entire training procedure, where \(N_{i,k}\) is the number of rollouts assigned to prompt \(x_i\) at epoch \(k\). The allocation is subject to the global rollout budget
\[
\sum_{k=1}^K\sum_{i=1}^M N_{i,k} \le B.
\]

For the theoretical formulation, we associate each prompt \(x_i\) with a fixed score \(q_i\) that abstracts the expected learning signal obtained from additional rollouts. We refer to \(q_i\) as the prompt informativeness score. The score \(q_i\) parameterizes the prompt-level utility function and determines how valuable it is to allocate additional rollouts to \(x_i\).
In our implementation, \(q_i\) is updated across epochs using a Beta-posterior plug-in rule. Let
\[
U_i(n):=U(q_i,n)
\]
denote the utility of assigning \(n\) cumulative rollouts to prompt \(x_i\). Since \(q_i\) is fixed in the theoretical formulation, we omit it from the notation and view \(U_i\) as a prompt-specific utility function. We will specify informativeness score \(q_i\) and utility $U_i(n)$ in \Cref{sec:informativeness}. The rollout allocation problem is then formulated as
\begin{equation}
\label{eq:allocation}
\begin{aligned}
\OPT(\mathcal X):=\max_{\{N_{i,k}\}} 
\quad & \sum_{i=1}^M U_i\!\left(\sum_{k=1}^K N_{i,k}\right) \\
\text{s.t.}\quad
& \sum_{i=1}^M\sum_{k=1}^K N_{i,k}\le B,\\
& N_{i,k}\in\{0,1,\ldots,\Nmax\},\quad i\in[M],\ k\in[K].
\end{aligned}
\end{equation}
Here \(\sum_{k=1}^K N_{i,k}\) is the cumulative number of rollouts assigned to prompt \(x_i\) and \(\Nmax\) denotes the maximum number of rollouts that can be assigned to a single prompt in one epoch. Compared with uniform allocation, this formulation allows rollout numbers to vary across prompts and epochs, so that more budget can be assigned to prompts with higher utility while respecting the same total budget.

\subsection{Problem Reformulation}

The allocation problem in \Cref{eq:allocation} is coupled across epochs because the utility of prompt \(x_i\) depends on its cumulative allocation \(\sum_{k=1}^K N_{i,k}\). This temporal coupling makes it difficult to choose the epoch-wise allocations \(N_{i,k}\) in an online manner. We address this difficulty by using a Fenchel-dual representation of each utility function, which linearizes the dependence on cumulative allocations and yields an allocation subproblem that decomposes across epochs.

For each prompt \(x_i\), define the concave conjugate of \(U_i\) as
\[
U_i^*(\theta)
=
\inf_{s\ge 0}\bigl\{s\theta - U_i(s)\bigr\}.
\]
Assuming that \(U_i\) is closed, concave and nondecreasing on \(\mathbb{R}_+\), we have the biconjugate representation
\[
U_i(s)
=
\inf_{\theta\ge 0}\bigl\{s\theta - U_i^*(\theta)\bigr\}.
\]
Introducing a Fenchel dual variable \(\theta_i\) for each prompt \(x_i\), together with a Lagrange multiplier \(\mu\ge 0\) for the global rollout budget constraint, we obtain the following dual objective:
\[
L_{\mathcal X}(\bm\theta,\mu)
=
B\mu
+
\max_{\{N_{i,k}\in\{0,1,\ldots,\Nmax\}\}}
\left\{
\sum_{i=1}^M
(\theta_i-\mu)
\sum_{k=1}^K N_{i,k}
\right\}
-
\sum_{i=1}^M U_i^*(\theta_i).
\]
By weak duality, we know that
\[
\OPT(\mathcal X)
\le
\inf_{\bm\theta\ge 0,\mu\ge 0}
L_{\mathcal X}(\bm\theta,\mu).
\]

The key advantage of this reformulation is that the maximization over allocations now decomposes across epochs. For fixed \((\bm\theta,\mu)\), the allocation term can be written as \(K\) identical per-epoch subproblems. Hence,
\[
L_{\mathcal X}(\bm\theta,\mu)
=
K\,L(\bm\theta,\mu),
\]
where the per-epoch dual objective is
\[
L(\bm\theta,\mu)
=
\frac{B}{K}\mu
+
\max_{\{n_i\in\{0,1,\ldots,\Nmax\}\}}
\left\{
\sum_{i=1}^M (\theta_i-\mu)n_i
\right\}
-
\frac{1}{K}\sum_{i=1}^M U_i^*(\theta_i).
\]
Here, \(n_i\) can be interpreted as the allocation decision for prompt \(x_i\) in a representative epoch. Equivalently, if
\[
\tilde N_i := \frac{1}{K}\sum_{k=1}^K N_{i,k},
\]
then \(\tilde N_i\) denotes the average number of rollouts assigned to prompt \(x_i\) across epochs. Thus, the Fenchel-dual reformulation turns the temporally coupled cumulative-utility objective into per-epoch allocation subproblems coordinated by the dual variables \((\bm\theta,\mu)\).

\section{Informativeness and Utility Function Design} \label{sec:informativeness}
In this section, we specify the prompt informativeness score and the resulting utility function. We write \(q_i\) for the fixed score used in the theoretical formulation and \(q_{k,i}\) for the plug-in posterior score used by the implementation at epoch \(k\).

Consider each prompt \(x_i\), and let \(\mathcal H_i^k\) denote the rollout history collected for this prompt up to the beginning of epoch \(k\). We model its latent success probability \(p_i\) using a Beta posterior:
\[
p_i \mid \mathcal H_i^k \sim \mathrm{Beta}(a_{k,i},b_{k,i}).
\]
This posterior serves as a compact Bayesian summary of the current evidence for prompt \(x_i\), and provides a principled basis for measuring both the current success probability and the remaining uncertainty.
The posterior mean is
\[
\tilde p_{k,i} := \mathbb E[p_i\mid \mathcal H_i^k] =\frac{a_{k,i}}{a_{k,i}+b_{k,i}},
\]
which estimates the current success probability of prompt \(x_i\). The posterior variance is
\[
\sigma_{k,i}^2 := \mathrm{Var}(p_i\mid \mathcal H_i^k) = \frac{a_{k,i}b_{k,i}} {(a_{k,i}+b_{k,i})^2(a_{k,i}+b_{k,i}+1)}.
\]
The posterior mean captures the current performance estimate, while the posterior variance characterizes the remaining uncertainty in that estimate. Prompts with larger posterior uncertainty may be more informative, but uncertainty alone can overemphasize prompts that are either too easy or too difficult. To capture learnability more directly, we use the posterior expectation of the Bernoulli variance:
\[
q_{k,i} := \mathbb E\left[p_i(1-p_i)\mid \mathcal H_i^k\right].
\]
Under the Beta posterior, this informativeness score has the closed-form expression
\[
q_{k,i}=\frac{a_{k,i}b_{k,i}}{(a_{k,i}+b_{k,i})(a_{k,i}+b_{k,i}+1)}.
\]
The term \(p_i(1-p_i)\) is maximized near \(p_i=1/2\) and vanishes in the degenerate regimes \(p_i\in\{0,1\}\). Thus, \(q_{k,i}\) measures the extent to which the outcome of prompt \(i\) remains sensitive to additional rollouts. A larger value of \(q_{k,i}\) indicates that prompt \(i\) is still learnable and that further budget spent on this prompt is likely to produce useful training signals.

In the regret analysis, we fix \(q_i\) and analyze the corresponding fixed utility. In the implementation, this fixed score is replaced by the epoch-wise plug-in posterior estimate \(q_{k,i}\). For the fixed-utility formulation, define
\[
U_i(n):=U(q_i,n)=1-\exp(-\eta q_i n)=1-\exp(-c_i n),
\quad
c_i:=\eta q_i,
\]
where \(n\) is the cumulative number of rollouts assigned to prompt \(i\), and \(\eta>0\) is a temperature parameter. Larger values of \(\eta\) make the utility saturate more quickly, placing greater emphasis on early allocations, while smaller values lead to a more gradual increase and weaker diminishing returns.

This utility has two useful properties. First, it is increasing in both the cumulative allocation \(n\) and the informativeness score \(q_i\). Second, it is concave in \(n\), which captures diminishing returns: early rollouts are typically more informative, while the marginal value of additional rollouts decreases as more samples are collected. Consequently, this utility favors prompts that are both informative and under-explored, while preventing excessive allocation to a single prompt.

Substituting this utility into the fixed-budget allocation problem gives
\begin{equation}
\label{eq:allocation_final}
\begin{aligned}
\OPT(\mathcal X) := \max_{\{N_{i,k}\}} \quad
& \sum_{i=1}^M \left[ 1-\exp\left(-\eta q_i\sum_{k=1}^K N_{i,k}\right)\right] \\
\text{s.t.}\quad
& \sum_{i=1}^M\sum_{k=1}^K N_{i,k}\le B,\\
& 0\le N_{i,k}\le \Nmax,
\quad i\in[M],\ k\in[K].
\end{aligned}
\end{equation}
This formulation yields a simple Bayesian mechanism for adaptive rollout allocation: the implementation uses historical rollouts to update posterior statistics \(q_{k,i}\), and then allocates future rollout budget toward prompts whose outcomes are expected to remain informative.

\subsection{Theoretical Motivation}
\label{subsec:theoretical-motivation}

Our notion of informativeness is motivated by recent analyses of prompt-level learnability in reinforcement learning with verifiable rewards. Under binary rewards, the policy gap to the entropy-regularized optimum is governed by the variance of the Bernoulli reward distribution \citep{bae2026online}. Let \(p_i\) denote the pass probability for prompt \(x_i\). We define
\begin{equation}
I_i := p_i(1-p_i),
\end{equation}
which attains its maximum at \(p_i=1/2\) and vanishes when \(p_i\in\{0,1\}\). This design is also consistent with recent findings in difficulty-aware RL for LLMs \citep{shrivastava2025sample,hu2025vade}: prompts that are either too easy or too difficult tend to provide limited learning signal, since the model almost always succeeds or fails on them, whereas medium-difficulty prompts induce higher outcome variance and thus yield more informative updates. In our setting, \(I_i\) captures this principle directly: easy and hard prompts, corresponding to \(p_i\) close to \(1\) or \(0\), receive fewer rollouts, while prompts of intermediate difficulty, where \(p_i\approx 1/2\), are assigned more rollouts.

This observation provides a natural justification for using reward variation as a measure of informativeness. However, in our rollout allocation setting, the latent pass probability \(p_i\) is unknown and must be estimated online from historical rollouts. Rather than using a point estimate of \(p_i\), we maintain the posterior distribution
\[
p_i\mid \mathcal H_i^k \sim \mathrm{Beta}(a_{k,i},b_{k,i}),
\]
which captures both the current estimate of the prompt difficulty and the remaining uncertainty in that estimate. We then define the posterior informativeness as the posterior expectation of the ideal learnability:
\[
q_{k,i} := \mathbb E[I_i\mid \mathcal H_i^k] = \mathbb E[p_i(1-p_i)\mid \mathcal H_i^k].
\]
Under the Beta posterior, this yields
\[
q_{k,i} = \frac{a_{k,i}b_{k,i}} {(a_{k,i}+b_{k,i})(a_{k,i}+b_{k,i}+1)}.
\]

While prior work uses this variance-based learnability mainly to filter prompts into retained or discarded subsets, we use it as a continuous Bayesian score for optimizing rollout allocation under a fixed budget. Thus, \(q_{k,i}\) can be interpreted as a Bayesian proxy for the ideal prompt-level learnability. Compared with directly filtering prompts based on empirical pass rates, this posterior formulation is better suited for online rollout allocation: it accounts for uncertainty induced by limited historical samples and provides a smooth, continuously updated estimate of how informative future rollouts from each prompt are expected to be.

This motivates the plug-in utility used by the implementation,
\[
U(q_{k,i},n) = 1-\exp(-\eta q_{k,i}n),
\]
where larger \(q_{k,i}\) increases the marginal value of additional rollouts, while concavity in \(n\) imposes diminishing returns and prevents excessive allocation to a single prompt. The theorem below analyzes the fixed-utility version obtained by holding \(q_i\) fixed; the posterior score \(q_{k,i}\) is used as a practical plug-in update in CERO.

\section{Online Rollout Allocation Algorithm}

We now present \textsc{CERO}, an online primal-dual rollout allocation algorithm for LLM reinforcement learning under a fixed generation budget. 
The formal pseudo-code is provided in \Cref{alg:rollout-allocation}.

\paragraph{Online rollout allocation.}
Given the current dual variables $(\bm{\theta}^k,\mu^k)$, \textsc{CERO} determines the number of rollouts to collect for each prompt by solving the prompt-wise allocation subproblem
\[
N_{i,k}
=
\arg\max_{0\le n\le \Nmax}
(\theta_i^k-\mu^k)n,
\label{eq:cero-allocation-rule}
\]
where $\Nmax$ is the per-prompt rollout cap.  
Since the objective is linear in $n$, the solution admits the threshold form
\[
N_{i,k}
=
\begin{cases}
\Nmax, & \theta_i^k>\mu^k,\\
0, & \theta_i^k\le \mu^k.
\end{cases}
\]
In implementation, we additionally clip the allocation by the remaining rollout budget to avoid overspending. The regret analysis below is stated for the primal--dual allocation rule with the box constraint \(0\le n\le\Nmax\), and controls budget violation through the global dual variable.
Thus, a prompt receives rollouts only when its estimated marginal value exceeds the current global budget price. 
This rule provides an online scheduler for LLM rollout generation: prompts that are currently more informative receive more samples, while low-value prompts are skipped to preserve budget for future epochs.

\paragraph{Prompt-level feedback and dual update.}
After selecting $N_{i,k}$, the current policy samples responses from prompt $x_i$,
\[
y_{i,k}^{(1)},\ldots,y_{i,k}^{(N_{i,k})}
\sim
\pi_\phi(\cdot\mid x_i),
\]
and the generated responses are evaluated by the reward model or task-specific reward function. 
The resulting prompt--response--reward tuples are added to the rollout buffer $\mathcal D$ and are used to update the informativeness estimate of prompt $x_i$. 
Specifically, \textsc{CERO} updates the Beta posterior associated with prompt $x_i$, computes the next informativeness score $q_{k+1,i}$, and recomputes the corresponding utility term.

Using this updated prompt-level feedback, \textsc{CERO} performs a projected online gradient step on the prompt-level dual variable:
\[
g_{\theta_i}^k
=
N_{i,k}
-
\frac{1}{K}\nabla U_i^*(\theta_i^k),
\qquad
\theta_i^{k+1}
=
\left[
\theta_i^k-\eta_\theta g_{\theta_i}^k
\right]_+.
\label{eq:cero-theta-update}
\]
This update follows from the $\theta_i$-subgradient of the per-epoch dual objective. 
Intuitively, it adjusts the future sampling priority of prompt $x_i$ by balancing the number of rollouts already allocated to the prompt against the utility implied by its current informativeness estimate.

\paragraph{Cross-epoch budget control.}
To regulate rollout usage over the entire training horizon, \textsc{CERO} updates the global multiplier according to the discrepancy between the current epoch's rollout consumption and the remaining budget $B_k$:
\[
g_\mu^k
=
\frac{B_k}{K}
-
\sum_{i=1}^M N_{i,k},
\qquad
\mu^{k+1}
=
\left[
\mu^k-\eta_\mu g_\mu^k
\right]_+.
\label{eq:cero-mu-update}
\]
When the algorithm consumes rollouts faster than the target budget rate, $\mu^k$ increases, making future allocation decisions more selective. 
Conversely, when rollout consumption is below the target rate, $\mu^k$ decreases, allowing more prompts to become eligible for sampling in later epochs. 
This global update couples all prompt-level allocation decisions through the shared generation budget, while avoiding a rigid per-epoch rollout quota.

\paragraph{Integration with existing LLM RL pipelines.}
\textsc{CERO} can be used as a drop-in online rollout scheduler for existing LLM reinforcement-learning pipelines.
In our implementation, we instantiate \textsc{CERO} on top of the widely used GRPO framework.
Whenever the rollout buffer satisfies $|\mathcal D|\ge B_{\mathrm{batch}}$, the policy is updated using the GRPO objective and the buffer is cleared.
Importantly, \textsc{CERO} leaves the underlying policy-gradient objective unchanged; it only modifies the data-collection process.
By adapting rollout allocation to prompt-level feedback and global budget pressure, \textsc{CERO} improves rollout efficiency under a fixed sampling budget.

\begin{algorithm}[t]
\caption{Cross-Epoch Online Adaptive Rollout Optimization (CERO)}
\label{alg:rollout-allocation}
\begin{algorithmic}[1]
\State \textbf{Input:} Number of prompts $M$, training epochs $K$, average rollout $N$, maximum rollouts $\Nmax$, utility functions $U_i(\cdot)$, policy $\pi_\theta$, batch size $B_{\text{batch}}$
\State Initialize dual variables $\boldsymbol{\theta}^1$, $\mu^1$, total rollout budget $B_0 = KMN$, collected batch $\mathcal{D} = \emptyset$
\For{$k = 1$ to $K$}
    \For{each prompt $x_i$}
        \State Compute the step allocation: $N_{i,k} = \arg\max_{0 \le N_{i,k} \le \min(\Nmax, B - B_{\text{used}})} (\theta_i^k - \mu^k) \cdot N_{i,k}$.
        
        \State Allocate $N_{i,k}$ rollouts: $y_i^{(1)}, \dots, y_i^{(N_{i,k})} \sim \pi_\theta(\cdot \mid x_i)$.
        \State Add collected rollouts to batch: $\mathcal{D} \gets \mathcal{D} \cup \{y_i^{(1)}, \dots, y_i^{(N_{i,k})}\}$
        \State Update remaining budget: $B_k \gets B_k - N_{i,k}$
        \State Update dual variables via online gradient descent:
        \[
        g_{\theta_i}^k = {N}_{i,k} - \frac{1}{K} \nabla U_i^*(\theta_i^k), \quad\theta_i^{k+1} = \left[\theta_i^k - \eta_\theta g_{\theta_i}^k \right]_+
        \]
        
        \If{$|\mathcal{D}| \ge B_{\text{batch}}$}
            \State \textbf{Policy update}
        \EndIf
    \EndFor
    
    \If{$B < 0$} 
        \State \textbf{Terminate}
    \EndIf
    
    \State Update global dual variable:
    \[
    g_\mu^k = \frac{B_k}{K} - \sum_{i=1}^M {N}_{i,k}, \quad\mu^{k+1} = \left[\mu^k - \eta_\mu g_\mu^k \right]_+, \quad B_{k+1} = B_k
    \]
\EndFor
\end{algorithmic}
\end{algorithm}

\subsection{Theoretical Guarantee}

Before stating the regret guarantee, we first define the offline benchmark against which the online rollout allocation algorithm is compared. 
For the theoretical analysis, we consider the setting where each prompt $x_i$ has a fixed informativeness score $q_i$. 
To simplify notation, we suppress the dependence on $q_i$ and the utility hyperparameter $\eta$.

Given a total rollout budget $B$, let $\OPT$ denote the optimal value of the offline allocation problem in Eq.~\eqref{eq:allocation_final}. 
For \textsc{CERO} algorithm, we define its realized utility as
\[
    \ALG
    :=
    \sum_{i=1}^M
    U_i\left(\sum_{k=1}^K N_{i,k}\right),
\]
where $N_{i,k}$ is the number of rollouts allocated to prompt $x_i$ at epoch $k$. 
The regret of \textsc{CERO} is defined as following:
\[
    \Reg := \OPT - \ALG
\]

\begin{theorem}[main result]
\label{thm:rollout-allocation-regret}
In the LLM RL post-training setting, suppose that each training prompt appears only once in each training epoch, and consider the CERO algorithm in \Cref{alg:rollout-allocation}. Suppose the prompt-level Fenchel dual variables and the global rollout-budget dual variable are projected onto compact domains,
\[
    \bm\theta^k\in\Theta_\epsilon:=\prod_{i=1}^M[\epsilon,c_i],
    \quad
    \mu^k\in\mathcal M:=[0,\bar\mu],
\]
where \(c_i=\eta q_i\), \(\epsilon>0\), and \(\bar\mu<\infty\). If the prompt-level and budget-level OCO updates incur regrets \(\Reg_\theta(K)\) and \(\Reg_\mu(K)\), then
\[
    \Reg \le \Reg_\theta(K)+\Reg_\mu(K).
\]
Consequently, if projected online gradient ascent is used on \(\Theta_\epsilon\) and \(\mathcal M\) with uniformly bounded supergradients, then
\[
    \Reg=O(\sqrt K).
\]
\end{theorem}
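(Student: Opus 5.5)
The plan is the standard online primal--dual argument for budgeted problems with non-separable concave rewards, in the style of \citet{balseiro2020dual}: sandwich both $\OPT$ and $\ALG$ between sums of the per-epoch dual objectives evaluated along the algorithm's dual trajectory.

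\textbf{Step 1: upper bound on $\OPT$.} By weak duality from the Fenchel reformulation, for every fixed $(\bm\theta,\mu)\in\Theta_\epsilon\times\mathcal M$ we have
\[
\OPT\le L_{\mathcal X}(\bm\theta,\mu)=K\,L(\bm\theta,\mu).
\]
I will first check that restricting $\theta_i$ to $[\epsilon,c_i]$ loses nothing in the infimum defining $U_i(s)$. The minimizing $\theta$ is $U_i'(s)=c_ie^{-c_is}\le c_i$. On $[0,\Nmax K]$ it is also at least $c_ie^{-c_i\Nmax K}$, so $\epsilon$ must be chosen below this value; I will state this as part of the choice of $\epsilon$. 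Next I write $L=\frac1K\sum_{k}L(\bm\theta,\mu)$ and compare it with $\frac1K\sum_k L(\bm\theta^k,\mu^k)$. The function $L$ is convex in $(\bm\theta,\mu)$ (a max of affine functions minus the concave $U_i^*$), so this is an OCO problem run by the algorithm. For the per-round losses
\[
f_k(\bm\theta,\mu)=\tfrac{B}{K}\mu+\textstyle\sum_i(\theta_i-\mu)N_{i,k}-\tfrac1K\sum_iU_i^*(\theta_i),
\]
the updates in the algorithm are exactly projected gradient steps. The $\theta$- and $\mu$-blocks separate, so the regret splits additively into $\Reg_\theta(K)+\Reg_\mu(K)$.

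\textbf{Step 2: lower bound on $\ALG$.} Let $S_i=\sum_kN_{i,k}$. Using $U_i(S_i)=\inf_\theta\{S_i\theta-U_i^*(\theta)\}$, I take the comparator $\hat\theta_i=U_i'(S_i)\in\Theta_\epsilon$, which gives
\[
\ALG=\sum_i\bigl(S_i\hat\theta_i-U_i^*(\hat\theta_i)\bigr)=\sum_k f_k^{\theta}(\hat{\bm\theta}),
\]
where $f_k^{\theta}$ denotes the $\theta$-part of $f_k$. The OCO guarantee, applied in the direction appropriate to $\theta$, then relates this to $\sum_kf_k^\theta(\bm\theta^k)$ up to $\Reg_\theta(K)$.

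For the $\mu$-part, the threshold rule makes $N_{i,k}$ the maximizer of $(\theta_i^k-\mu^k)n$. Hence $f_k(\bm\theta^k,\mu^k)=L(\bm\theta^k,\mu^k)$, and this quantity dominates $f_k$ at any fixed $(\bm\theta,\mu)$ on the allocation term. The budget terms $\mu^k(B/K-\sum_iN_{i,k})$ are handled by comparing with $\mu=0$ in the $\mu$-regret. This yields
\[
\sum_k\mu^k\Bigl(\tfrac BK-\sum_iN_{i,k}\Bigr)\le\Reg_\mu(K).
\]

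\textbf{Step 3: combine.} Chaining the two bounds gives
\[
\OPT\le\sum_kL(\bm\theta^k,\mu^k)+(\text{regrets})\le\ALG+\Reg_\theta(K)+\Reg_\mu(K).
\]
For the rate, note that $\Theta_\epsilon$ and $\mathcal M$ have bounded diameter. The supergradients are bounded because $|N_{i,k}|\le\Nmax$, $|\tfrac BK-\sum_iN_{i,k}|\le MN+M\Nmax$, and $\nabla U_i^*(\theta)=\frac1{c_i}\log(c_i/\theta)\le\frac1{c_i}\log(c_i/\epsilon)$ on $[\epsilon,c_i]$, the last being where $\epsilon>0$ is needed. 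The standard projected-OGD bound with step size $\propto1/\sqrt K$ then gives $O(\sqrt K)$ for each block.

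\textbf{Main obstacle.} The delicate step is Step 2: making the sign conventions consistent. The $\theta$-player minimizes $L$, yet the lower bound on $\ALG$ needs the inequality in the opposite direction at the comparator $\hat{\bm\theta}$. I would first try to exploit that the $\theta$-part of $f_k$ is concave in $\theta$ (it is linear minus concave $U^*$, hence convex, so its negation is concave). The route is then to phrase the $\theta$-updates as gradient ascent on $\sum_iS_i\theta_i-U_i^*(\theta_i)$, matching the theorem's wording. If this fails, the fallback is to absorb the mismatch using the saddle-point structure, so that only the $\mu$-complementary-slackness term and the $\theta$-regret remain.
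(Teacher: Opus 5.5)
Your skeleton matches the paper's: weak duality at each iterate combined with the threshold rule (the paper's Step 4, done there by plugging in the evenly spread optimum $\bar N^\star_{i,k}=n_i^\star/K$), a $\bm\theta$-regret bound against the Fenchel maximizer $\hat\theta_i=U_i'(S_i)$ (Step 2), and a $\mu$-regret bound against a fixed comparator (Step 3). You use $\mu=0$; the paper uses the max over $[0,\bar\mu]$ to pick up the violation term and then drops it. However, you put the online-learning step in the wrong place and misdiagnose the crux.

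Step 1 needs no OCO. $\OPT\le K L(\bm\theta^k,\mu^k)$ holds at every iterate by weak duality, for any $\theta_i\ge 0$, so the $\epsilon$-truncation plays no role there. Hence $\OPT\le\sum_k L(\bm\theta^k,\mu^k)=\sum_k f_k(\bm\theta^k,\mu^k)$. Invoking regret there would double-count; in any case a minimizer's regret bounds $\sum_k f_k(\bm\theta^k,\mu^k)$ from above, not below.

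More importantly, the \emph{main obstacle} you flag does not exist. Both dual players minimize convex losses, $f_k^\theta(\bm\theta)=\sum_i[\theta_iN_{i,k}-\tfrac1KU_i^*(\theta_i)]$ and $f_k^\mu(\mu)=\mu(b-\sum_iN_{i,k})$ with $b=B/K$. The algorithm's updates are projected descent on these, which is identical to the paper's ascent on the concave $\psi_k=-f_k^\theta$ and $q_k(\mu)=-f_k^\mu(\mu)$. The minimizer's regret against $\hat{\bm\theta}$ gives
\[
\sum_k f_k^\theta(\bm\theta^k)\le\sum_k f_k^\theta(\hat{\bm\theta})+\Reg_\theta(K)=\ALG+\Reg_\theta(K),
\]
which is exactly the direction your Step 3 needs. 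Against $\mu=0$ it gives $\sum_k f_k^\mu(\mu^k)\le\Reg_\mu(K)$. Your proposed repair, gradient ascent on $\sum_i[S_i\theta_i-U_i^*(\theta_i)]$, has the sign backwards. That function is convex, as you note yourself, and it involves the final totals $S_i$, which are unavailable online. So ascent on it is neither the algorithm nor covered by any OCO bound.

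On $\epsilon$: requiring $\epsilon\le c_ie^{-c_iK\Nmax}$ so that $\hat{\bm\theta}\in\Theta_\epsilon$ is more careful than the paper, which only asserts that the truncation error \emph{can be absorbed}. For $\epsilon$ independent of $K$ that error can be as large as $\epsilon\sum_iS_i$, which need not be $O(\sqrt K)$. But with your choice, $\frac1{c_i}\log(c_i/\epsilon)\ge K\Nmax$, so the gradient bound you wrote is not uniform in $K$. You must keep the $1/K$ prefactor. Then $\bigl|N_{i,k}-\tfrac1K(U_i^*)'(\theta_i)\bigr|$ is bounded independently of $K$; it is at most $\Nmax$ with per-prompt floors $c_ie^{-c_iK\Nmax}$. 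So $G_\theta=O(\sqrt M\,\Nmax)$, $D_\theta\le(\sum_ic_i^2)^{1/2}$, and the $O(\sqrt K)$ rate survives.
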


\section{Numerical Experiments}

\subsection{Experimental Setup}

We integrate our proposed \textbf{CERO} with GRPO and compare it with the vanilla GRPO algorithm.
We evaluate their performance on four open-weight language models:
\textbf{DeepSeek-R1-Distill-Qwen-1.5B} \citep{guo2025deepseek} (abbreviated as R1-Distill-1.5B),
\textbf{Qwen3-4B-Base} \citep{yang2025qwen3},
\textbf{Qwen3-4B-Instruct} \citep{yang2025qwen3},
and \textbf{Qwen2.5-Math-7B} \citep{yang2024qwen2}.
We post-training these LLMs via verl \citep{sheng2025hybridflow} framework on DAPO-Math-17K dataset \citep{yu2025dapo}, which consists of $17917$ training prompts with ground truth answers for verification.  We initiate the Beta posterior for training prompts as $\mathrm{Beta}(1,1)$. More training details are deferred in Appendix \ref{app:exp-details}.

We evaluate mathematical reasoning on four competition-style benchmarks:
\textbf{AIME24}, \textbf{AIME25}, \textbf{AIME26}, and \textbf{AMC23}.
The AIME sets test challenging multi-step high-school reasoning with exact-match integer answers, while AMC23 provides broader coverage with more varied difficulty and problem formats. Together, these benchmarks assess whether improvements generalize across contest years and beyond AIME-style problems.

\subsection{Main results and analysis}

Table~\ref{tab:main_results} compares CERO with vanilla GRPO across four representative models and four mathematical reasoning benchmarks. CERO consistently outperforms GRPO for every evaluated model--benchmark pair, indicating that its gains are not tied to a particular architecture, model scale, or post-training setting. Averaged across benchmarks, CERO improves over GRPO by $+4.84$ points on R1-Distill-1.5B, $+6.04$ points on Qwen3-4B-Base, $+3.28$ points on Qwen3-4B-Instruct, and $+1.53$ points on Qwen2.5-Math-7B. Such results demonstrate the effectiveness of our proposed method. In addition, CERO introduces negligible computational overhead: in our experiments, each allocation update takes less than one second.



\begin{table*}[t]
\centering
\small
\setlength{\tabcolsep}{7pt}
\renewcommand{\arraystretch}{1.15}

\caption{Evaluation performance comparison across different models and benchmarks.}
\label{tab:main_results}

\begin{tabular}{lccccccc}
\toprule
& \rotatebox{55}{AIME24}
& \rotatebox{55}{AIME25}
& \rotatebox{55}{AIME26}
& \rotatebox{55}{AMC}
& \rotatebox{55}{Avg} \\
\midrule

R1-Distill-1.5B  + GRPO
& 0.1041 & 0.1416 & 0.0875 & 0.5468 & 0.22\\
R1-Distill-1.5B  + KnapsackRL
&  &  &  &  & \\
R1-Distill-1.5B + CERO
& \textbf{0.152} & \textbf{0.1666} & \textbf{0.1208} & \textbf{0.6343} & \textbf{0.2684} \\
\midrule

Qwen3-4B-base + GRPO
& 0.15 & 0.1312 & 0.1104 & 0.5046 & 0.2241 \\
Qwen3-4B-base + KnapsackRL
& &  & &  & \\
Qwen3-4B-base + CERO
& \textbf{0.1875} & \textbf{0.1833} & \textbf{0.1812} & \textbf{0.5875} & \textbf{0.2845} \\
\midrule

Qwen3-4B-Instruct + GRPO
& 0.3208 & 0.3291 & 0.3687 & 0.8546 & 0.4683 \\
Qwen3-4B-Instruct + KnapsackRL
&  &  &  &  &  \\
Qwen3-4B-Instruct + CERO

& \textbf{0.4} & \textbf{0.3625} & \textbf{0.3812} & \textbf{0.8609} & \textbf{0.5011} \\
\midrule

Qwen2.5-Math-7B + GRPO
& 0.2396 & 0.1541 & \textbf{0.1250} & 0.6531 & 0.2913 \\
Qwen2.5-Math-7B + KnapsackRL
&  & & & & \\
Qwen2.5-Math-7B + CERO
& \textbf{0.2625} & \textbf{0.1750} & 0.1187 & \textbf{0.6703} & \textbf{0.3066} \\
\bottomrule

\end{tabular}
\end{table*}

We next analyze the source of CERO's gains from the perspective of training signal availability. Following Knapsack-RL~\citep{li2025knapsack}, we define an \textbf{effective prompt} as a prompt whose rollout group contains both successful and failed responses. Such prompts have non-zero within-group reward variance and therefore induce non-zero GRPO gradients. We use the \emph{effective prompt ratio} to measure the fraction of prompts in a batch that provide useful learning signals, and report this metric in \Cref{fig:effective-ratio}.

Across all LLM post-training settings, CERO consistently achieves a higher effective prompt ratio than GRPO, which helps explain its stronger downstream performance. In particular, \Cref{fig:15B_effective_prompt_ratio} shows that the effective prompt ratio of GRPO drops sharply after a certain number of training steps. This trend suggests that, by that stage, DeepSeek-R1-Distill-Qwen-1.5B has already learned many of the sampled prompts; as a result, later rollout groups often contain only successful or failure responses, yielding ineffective prompts with zero reward variance. In contrast, CERO adaptively allocates more rollout budget to informative prompts and stops before the effective prompt ratio collapses, thereby maintaining a higher proportion of useful training signals throughout training.

\begin{figure}[t]
    \centering
    \begin{subfigure}[t]{0.45\linewidth}
        \centering
        \includegraphics[width=\linewidth]{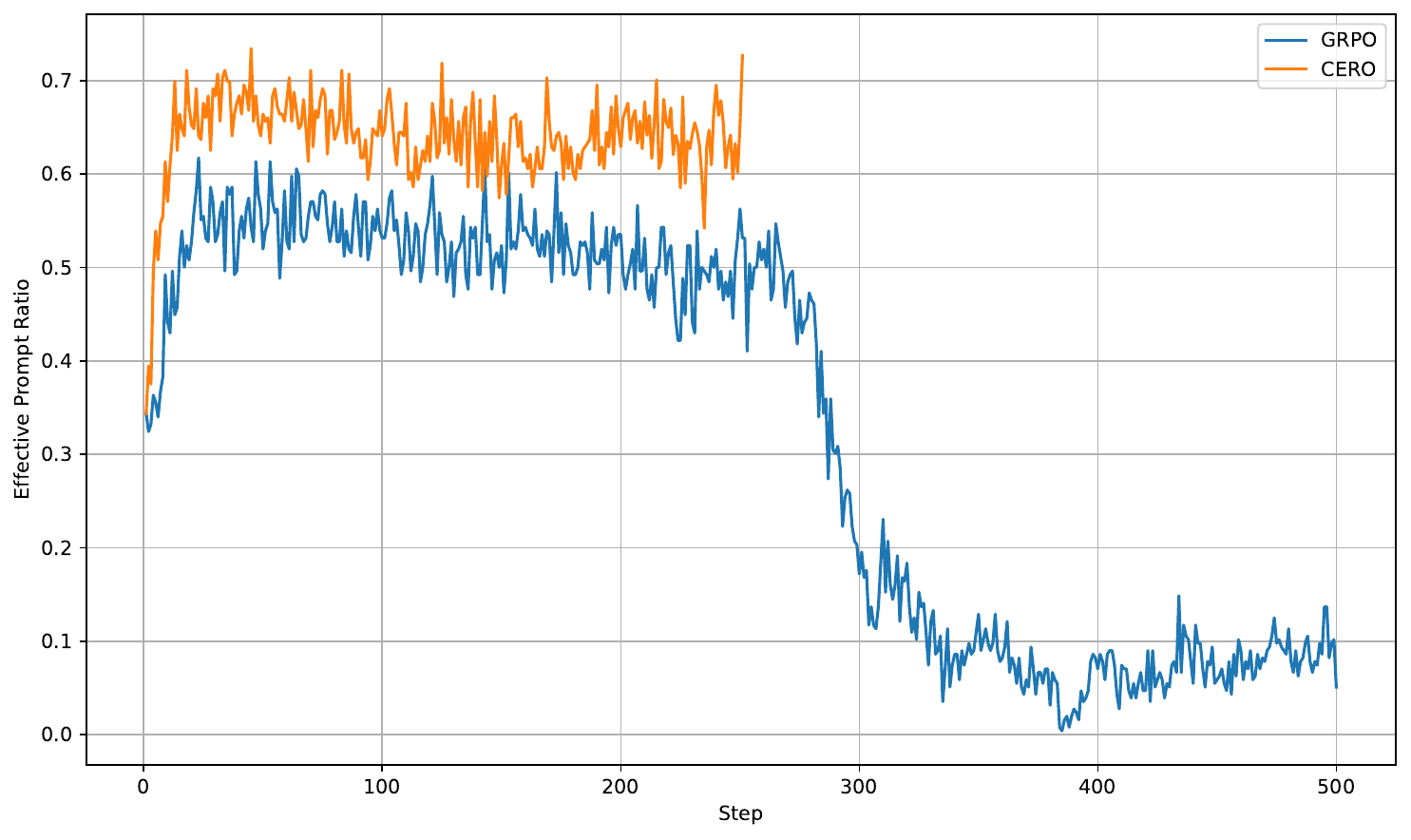}
        \caption{DeepSeek-R1-Distill-Qwen-1.5B}
        \label{fig:15B_effective_prompt_ratio}
    \end{subfigure}
    \hfill
    \begin{subfigure}[t]{0.45\linewidth}
        \centering
        \includegraphics[width=\linewidth]{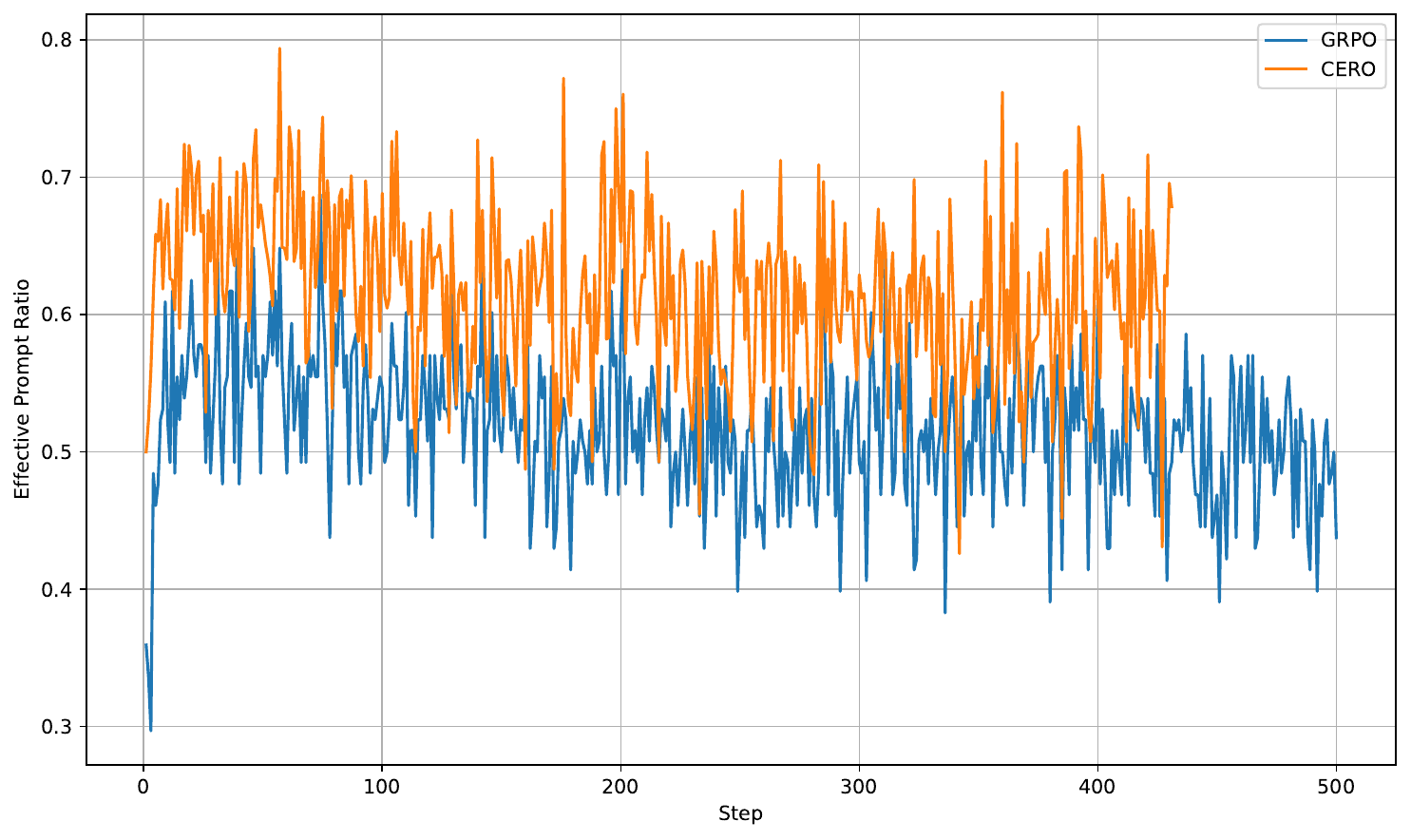}
        \caption{Qwen3-4B-base}
        \label{fig:4B_base_effective_prompt_ratio}
    \end{subfigure}
    \hfill
    \begin{subfigure}[t]{0.45\linewidth}
        \centering
        \includegraphics[width=\linewidth]{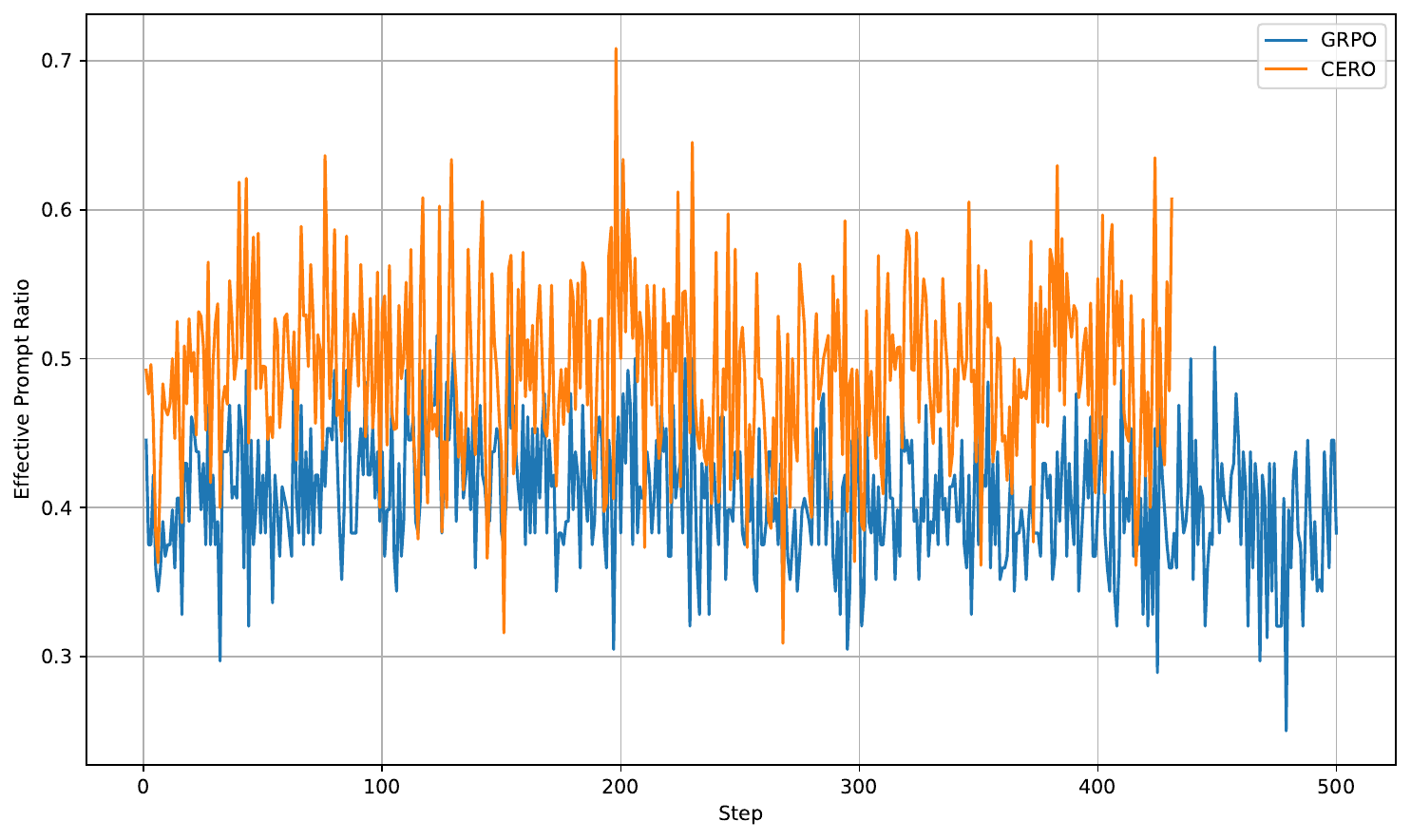}
        \caption{Qwen3-4B-Instruct}
        \label{fig:4B_Instruct_effective_prompt_ratio}
    \end{subfigure}
    \hfill
    \begin{subfigure}[t]{0.45\linewidth}
        \centering
        \includegraphics[width=\linewidth]{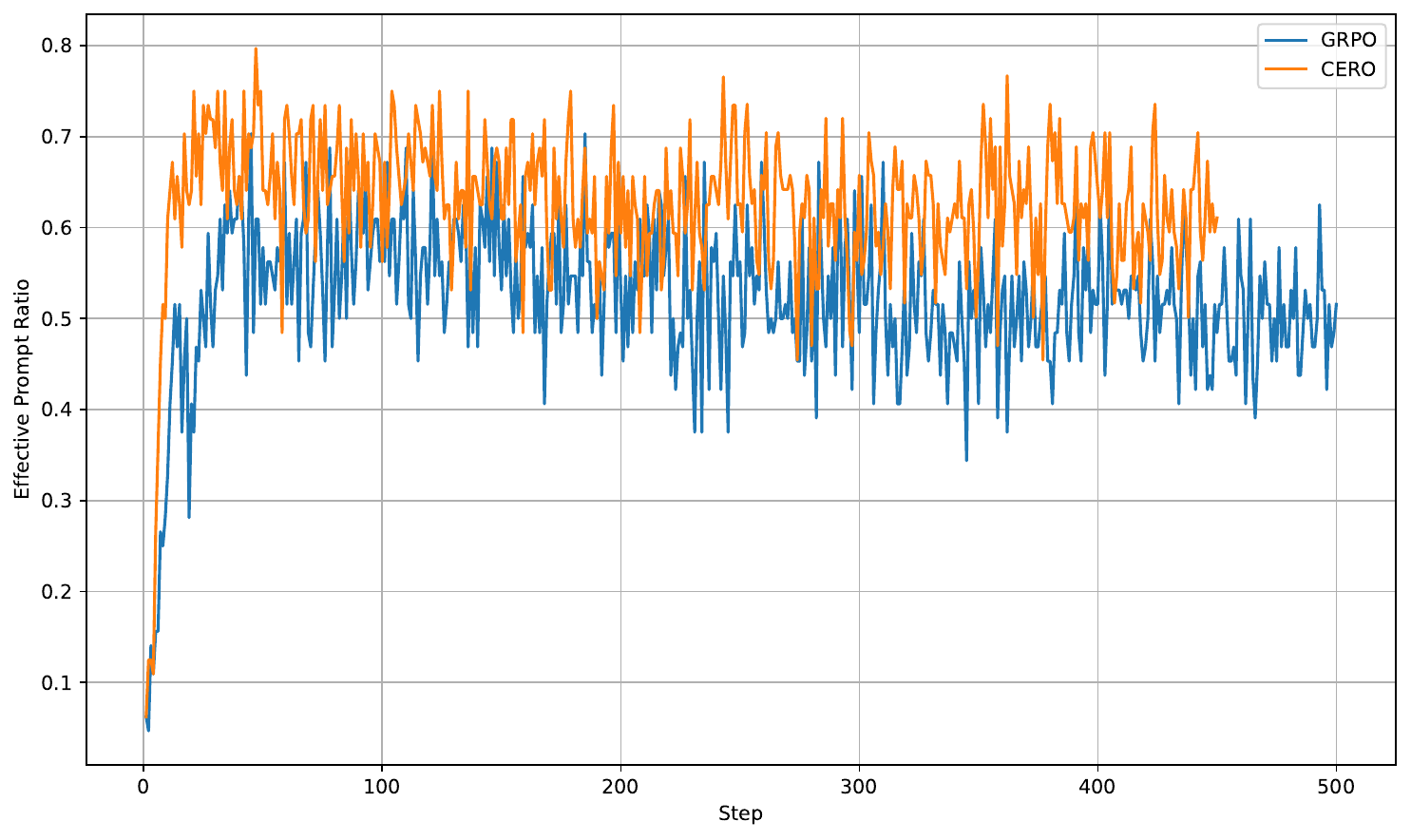}
        \caption{Qwen2.5-Math}
        \label{fig:7B_effective_prompt_ratio}
    \end{subfigure}
    \caption{Effective prompt ratio comparison between CERO and GRPO.}
    \label{fig:effective-ratio}
\end{figure}

\section{Conclusion}

We introduced \textsc{CERO}, a cross-epoch adaptive rollout allocation framework for LLM RL post-training under a fixed global budget. By modeling prompt informativeness with a Beta posterior, using a concave saturating utility to capture diminishing returns, and deriving a Fenchel-dual online optimization formulation, \textsc{CERO} adaptively prioritizes prompts that provide higher expected learning value. We established an $O(\sqrt{K})$ regret guarantee and empirically showed consistent improvements over GRPO across multiple models and mathematical reasoning benchmarks. These results suggest that adaptive rollout budgeting is an effective and principled direction for improving the sample efficiency of RL post-training.

\paragraph{Limitations.}
Our theoretical regret guarantee is established for fixed prompt-level utilities, whereas the implemented version uses epoch-wise plug-in posterior informativeness scores. Thus, the current analysis does not fully capture the nonstationarity induced by policy updates and posterior refreshes. Empirically, we evaluate CERO on mathematical-reasoning post-training with four open-weight backbones and four benchmarks; broader validation on other domains, reward models, and larger-scale training regimes remains important.

\bibliographystyle{abbrvnat}
\bibliography{bibliography}

\clearpage

\begin{APPENDICES}
\section{Proof of Theorem~\ref{thm:rollout-allocation-regret}}
\label{app:proof-rollout-allocation}

\paragraph{Step 1: Fenchel representation of \(U_i\).}
Recall that the prompt-level utility is
\[
U_i(n)=1-\exp(-c_i n),
\quad c_i:=\eta q_i ,
\]
where \(q_i\) is the informativeness score of prompt \(i\) and \(\eta>0\) is the temperature parameter. Throughout the proof, we treat \(q_i\) and hence \(c_i\) as fixed. The posterior update used in the implementation can be viewed as a plug-in procedure for updating these utilities over time.

For \(n\ge 0\), we have
\[
U_i'(n)=c_i \exp(-c_i n).
\]
Hence
\[
0 < U_i'(n) \le c_i .
\]
Therefore, the natural domain of the Fenchel dual variable is
\[
\theta_i \in [0,c_i].
\]

For a concave utility function, define its concave Fenchel conjugate by
\[
U_i^*(\theta_i):=\inf_{n\ge 0}\left\{\theta_i n - U_i(n)\right\}.
\]
For \(U_i(n)=1-\exp(-c_i n)\), this gives
\[
U_i^*(\theta_i)=\inf_{n\ge 0}\left\{\theta_i n -1+\exp(-c_i n)\right\}.
\]
For \(0<\theta_i\le c_i\), the first-order condition is
\[
\theta_i - c_i\exp(-c_i n)=0.
\]
Thus
\[
\exp(-c_i n)=\frac{\theta_i}{c_i},
\quad
n=\frac{1}{c_i}\log\frac{c_i}{\theta_i}.
\]
Substituting this optimizer back into the conjugate yields
\[
\begin{aligned}
U_i^*(\theta_i)
&=\theta_i \cdot \frac{1}{c_i}\log\frac{c_i}{\theta_i}-1+\frac{\theta_i}{c_i}  \\
&=\frac{\theta_i}{c_i}\log\frac{c_i}{\theta_i}+\frac{\theta_i}{c_i}-1 ,
\quad 0<\theta_i\le c_i .
\end{aligned}
\]
The value at \(\theta_i=0\) is understood by continuity:
\[
U_i^*(0)=-1.
\]
Therefore,
\[
U_i^*(\theta_i)=\frac{\theta_i}{c_i}\log\frac{c_i}{\theta_i}+\frac{\theta_i}{c_i}-1,
\quad
\theta_i\in[0,c_i],
\]
with the convention \(0\log(c_i/0)=0\).

By concave Fenchel duality, for every \(n\ge 0\),
\[
U_i(n)=\min_{\theta_i\in[0,c_i]}\left\{\theta_i n - U_i^*(\theta_i)\right\}.
\]
Equivalently,
\[
-U_i(n)=\max_{\theta_i\in[0,c_i]}\left\{U_i^*(\theta_i)-\theta_i n\right\}.
\]

For technical convenience, the algorithm projects onto the compact truncated domain
\[
\Theta_i=[\epsilon,c_i],
\]
where \(\epsilon>0\) is arbitrarily small. The approximation error from replacing \([0,c_i]\) by \([\epsilon,c_i]\) can be absorbed into the dual-regret term.

\paragraph{Step 2: OCO control of the utility term.}
Let
\[
n_i:=\sum_{k=1}^K N_{i,k}
\]
denote the total number of rollouts allocated to prompt \(i\) by the online algorithm. The realized utility of the algorithm is
\[
\mathrm{ALG}=\sum_{i=1}^M U_i(n_i)=\sum_{i=1}^MU_i\!\left(\sum_{k=1}^K N_{i,k}\right).
\]
Using the concave Fenchel representation from Step~1, we have, for each \(i\),
\[
-U_i(n_i)=\max_{\theta_i\in[0,c_i]}\left\{U_i^*(\theta_i)-\theta_i n_i\right\}.
\]
Therefore,
\[
\begin{aligned}
-\mathrm{ALG}
&=\sum_{i=1}^M\left[-U_i\!\left(\sum_{k=1}^K N_{i,k}\right)\right]  \\
&=\max_{\theta\in\Theta}\sum_{i=1}^M\left[U_i^*(\theta_i)-\theta_i\sum_{k=1}^K N_{i,k}\right],
\end{aligned}
\]
where
\[
\Theta:=\prod_{i=1}^M[0,c_i].
\]
Rewriting the right-hand side as a sum over epochs gives
\[
\begin{aligned}
\sum_{i=1}^M\left[U_i^*(\theta_i)-\theta_i\sum_{k=1}^K N_{i,k}\right]
&=\sum_{k=1}^K\sum_{i=1}^M\left[\frac{1}{K}U_i^*(\theta_i)-\theta_i N_{i,k}\right].
\end{aligned}
\]
Define the prompt-level dual reward function
\[
\psi_k(\theta):=\sum_{i=1}^M\left[\frac{1}{K}U_i^*(\theta_i)-\theta_i N_{i,k}\right].
\]
Then
\[
-\mathrm{ALG}=\max_{\theta\in\Theta}\sum_{k=1}^K \psi_k(\theta).
\]

For technical convenience, the algorithm projects onto the truncated compactdomain
\[
\Theta_\epsilon:=\prod_{i=1}^M[\epsilon,c_i],
\]
where \(\epsilon>0\) is arbitrarily small. The error caused by replacing\(\Theta\) with \(\Theta_\epsilon\) can be absorbed into the prompt-level dualregret term.

Assume that the prompt-level online concave maximization subroutine guarantees
\[
\max_{\theta\in\Theta_\epsilon}\sum_{k=1}^K \psi_k(\theta)-\sum_{k=1}^K \psi_k(\theta^k)\le\mathrm{Reg}_\theta(K).
\]
Then
\begin{equation}
-\ALG
\le
\sum_{k=1}^K
\sum_{i=1}^M
\left[
\frac1K U_i^*(\theta_i^k)
-
\theta_i^k N_{i,k}
\right]
+
\Reg_\theta(K).
\label{eq:utility-oco-control}
\end{equation}

\paragraph{Step 3: OCO control of the budget violation.}
Let $b:=\frac{B}{K}$ be the average rollout budget per epoch. The total budget violation of the online allocation is
\[
\left[\sum_{k=1}^K\sum_{i=1}^M N_{i,k}-B\right]_+.
\]
Since \(B=Kb\), we can rewrite the violation as
\[
\left[\sum_{k=1}^K\left(\sum_{i=1}^M N_{i,k}-b\right)\right]_+.
\]

Let \(\mathcal M:=[0,\bar\mu]\) be the compact domain for the global budget dual variable, where \(\bar\mu\ge 1\). For any scalar \(z\), we have
\[
[z]_+\le\max_{\mu\in\mathcal M} \mu z .
\]
Applying this inequality with
\[
z=\sum_{k=1}^K\left(\sum_{i=1}^M N_{i,k}-b\right),
\]
we obtain
\[
\begin{aligned}
\left[\sum_{k=1}^K\sum_{i=1}^M N_{i,k}-B\right]_+
&\le\max_{\mu\in\mathcal M}\sum_{k=1}^K\mu\left(\sum_{i=1}^M N_{i,k}-b\right).
\end{aligned}
\]

Define the budget dual reward function
\[
q_k(\mu):=\mu\left(\sum_{i=1}^M N_{i,k}-b\right).
\]
Assume that the global budget online concave maximization subroutine guarantees
\[
\max_{\mu\in\mathcal M}\sum_{k=1}^K q_k(\mu)-\sum_{k=1}^K q_k(\mu^k)\le\mathrm{Reg}_\mu(K).
\]
Then
\begin{equation}
\label{eq:budget-oco-control}
\begin{aligned}
\left[\sum_{k=1}^K\sum_{i=1}^M N_{i,k}-B\right]_+
&\le\sum_{k=1}^K q_k(\mu^k)+\mathrm{Reg}_\mu(K) \\
&=\sum_{k=1}^K\mu^k\left(\sum_{i=1}^M N_{i,k}-b\right)+\mathrm{Reg}_\mu(K).
\end{aligned}
\end{equation}

Adding \eqref{eq:utility-oco-control} and \eqref{eq:budget-oco-control}, we get
\begin{align}
&-\ALG+\left[\sum_{k=1}^K\sum_{i=1}^M N_{i,k}-B\right]_+\nonumber\\
&\le\sum_{k=1}^K\left\{\sum_{i=1}^M\left[\frac1K U_i^*(\theta_i^k)-\theta_i^k N_{i,k}\right]+\mu^k\left(\sum_{i=1}^M N_{i,k}-b\right)\right\}+\Reg_\theta(K)+\Reg_\mu(K).
\label{eq:combined-oco-control}
\end{align}

\paragraph{Step 4: Comparison with an evenly spread offline optimum.}
Let \(N_{i,k}^\star\) be an optimal offline allocation for the benchmark, and define the cumulative offline allocation for prompt \(i\) by
\[
n_i^\star:=\sum_{k=1}^K N_{i,k}^\star .
\]
Consider the evenly spread version of this offline allocation:
\[
\bar N_{i,k}^\star:=\frac{n_i^\star}{K},
\quad i\in[M],\ k\in[K].
\]
Since \(0\le N_{i,k}^\star\le N_{\max}\), we have
\[
0\le n_i^\star\le K N_{\max},
\]
and therefore
\[
0\le \bar N_{i,k}^\star\le N_{\max}.
\]
Thus \(\bar N_{i,k}^\star\) is feasible for the per-round allocation subproblem in the continuous relaxation.

At epoch \(k\), the allocation rule chooses
\[
N_{i,k}\in\arg\max_{0\le n\le N_{\max}}(\theta_i^k-\mu^k)n .
\]
Therefore, for every \(i\) and \(k\),
\[
(\theta_i^k-\mu^k)N_{i,k}\ge(\theta_i^k-\mu^k)\bar N_{i,k}^\star .
\]
Equivalently,
\[
-(\theta_i^k-\mu^k)N_{i,k}\le-(\theta_i^k-\mu^k)\bar N_{i,k}^\star .
\]
Adding \(\frac{1}{K}U_i^*(\theta_i^k)\) and summing over \(i\), we obtain
\begin{equation}
\begin{aligned}
&\sum_{i=1}^M\left[\frac{1}{K}U_i^*(\theta_i^k)-\theta_i^k N_{i,k}\right]+\mu^k\left(\sum_{i=1}^M N_{i,k}-b\right)  \\
&\le\sum_{i=1}^M\left[\frac{1}{K}U_i^*(\theta_i^k)-\theta_i^k \bar N_{i,k}^\star\right]+\mu^k\left(\sum_{i=1}^M \bar N_{i,k}^\star-b\right).
\label{eq:primal-comparison-round}
\end{aligned}  
\end{equation}

Using \(\bar N_{i,k}^\star=n_i^\star/K\), the first term on the right-hand side becomes
\[
\begin{aligned}
\sum_{i=1}^M\left[\frac{1}{K}U_i^*(\theta_i^k)-\theta_i^k \frac{n_i^\star}{K}\right]
&=\frac{1}{K}\sum_{i=1}^M\left[U_i^*(\theta_i^k)-\theta_i^k n_i^\star\right].
\end{aligned}
\]
By the Fenchel inequality from Step~1, for any \(\theta_i^k\in[0,c_i]\),
\[
U_i(n_i^\star) \le\theta_i^k n_i^\star - U_i^*(\theta_i^k).
\]
Equivalently,
\[
U_i^*(\theta_i^k)-\theta_i^k n_i^\star \le-U_i(n_i^\star).
\]
Therefore,
\[
\frac{1}{K}\sum_{i=1}^M\left[U_i^*(\theta_i^k)-\theta_i^k n_i^\star\right]\le-\frac{1}{K}\sum_{i=1}^M U_i(n_i^\star)=-\frac{\mathrm{OPT}}{K}.
\]

It remains to control the budget term. Since the offline allocation is feasible,
\[
\sum_{i=1}^M n_i^\star=\sum_{i=1}^M\sum_{k=1}^K N_{i,k}^\star\le B .
\]
Hence
\[
\sum_{i=1}^M \bar N_{i,k}^\star=\frac{1}{K}\sum_{i=1}^M n_i^\star\le\frac{B}{K}=b.
\]
Since \(\mu^k\ge 0\), we have
\[
\mu^k\left(\sum_{i=1}^M \bar N_{i,k}^\star-b\right)\le 0.
\]
Combining the preceding inequalities gives, for every epoch \(k\),
\[
\begin{aligned}
\sum_{i=1}^M\left[\frac{1}{K}U_i^*(\theta_i^k)-\theta_i^k N_{i,k}\right]+\mu^k\left(\sum_{i=1}^M N_{i,k}-b\right)  \le-\frac{\mathrm{OPT}}{K}.
\end{aligned}
\]
Summing over $k=1,\ldots,K$, we obtain
\begin{equation}
\sum_{k=1}^K
\left\{
\sum_{i=1}^M
\left[
\frac1K U_i^*(\theta_i^k)
-
\theta_i^k N_{i,k}
\right]
+
\mu^k
\left(
\sum_{i=1}^M N_{i,k}-b
\right)
\right\}
\le
-\OPT.
\label{eq:offline-comparison}
\end{equation}

\paragraph{Step 5: Combine the inequalities}
Substituting \eqref{eq:offline-comparison} into \eqref{eq:combined-oco-control}, we get
\[
-\ALG
+
\left[
\sum_{k=1}^K\sum_{i=1}^M N_{i,k}-B
\right]_+
\le
-\OPT
+
\Reg_\theta(K)
+
\Reg_\mu(K).
\]
Rearranging,
\[
\OPT-\ALG
+
\left[
\sum_{k=1}^K\sum_{i=1}^M N_{i,k}-B
\right]_+
\le
\Reg_\theta(K)
+
\Reg_\mu(K).
\]
Since the budget violation term is nonnegative, we  can drop it from the left-hand side and obtain
\[
\OPT-\ALG
\le
\Reg_\theta(K)+\Reg_\mu(K).
\]

\begin{lemma}[OCO regret for dual ascent updates]
\label{lem:oco-dual-regret}
Let
\[
\Theta_\epsilon:=\prod_{i=1}^M[\epsilon,c_i],
\quad
\mathcal M:=[0,\bar\mu],
\]
where \(c_i=\eta q_i\), \(\epsilon>0\), and \(\bar\mu<\infty\).
Suppose that the prompt-level dual reward functions \(\psi_k:\Theta_\epsilon\to\mathbb R\) are concave and have uniformly bounded supergradients:
\[
\|\nabla \psi_k(\theta)\|_2\le G_\theta,
\quad
\forall \theta\in\Theta_\epsilon,\ k\in[K],
\]
and that the budget dual reward functions \(q_k:\mathcal M\to\mathbb R\) are concave and have uniformly bounded supergradients:
\[
|\nabla q_k(\mu)|\le G_\mu,
\quad
\forall \mu\in\mathcal M,\ k\in[K].
\]
Consider the projected online gradient ascent updates
\[
\theta^{k+1} = \Pi_{\Theta_\epsilon} \left( \theta^k+\eta_\theta\nabla\psi_k(\theta^k)\right),
\]
and
\[
\mu^{k+1}=\Pi_{\mathcal M}\left(\mu^k+\eta_\mu\nabla q_k(\mu^k)\right).
\]
Then, with
\[
\eta_\theta=\frac{D_\theta}{G_\theta\sqrt K},
\quad
\eta_\mu=\frac{D_\mu}{G_\mu\sqrt K},
\]
where
\[
D_\theta:=\sup_{\theta,\theta'\in\Theta_\epsilon}\|\theta-\theta'\|_2,
\quad
D_\mu:=\sup_{\mu,\mu'\in\mathcal M}|\mu-\mu'|,
\]
we have
\[
\max_{\theta\in\Theta_\epsilon}\sum_{k=1}^K\psi_k(\theta)-\sum_{k=1}^K\psi_k(\theta^k)\le D_\theta G_\theta\sqrt K,
\]
and
\[
\max_{\mu\in\mathcal M}\sum_{k=1}^K q_k(\mu)-\sum_{k=1}^K q_k(\mu^k)\le D_\mu G_\mu\sqrt K.
\]
Consequently,
\[
\mathrm{Reg}_\theta(K)=O(\sqrt K),
\quad
\mathrm{Reg}_\mu(K)=O(\sqrt K).
\]
\end{lemma}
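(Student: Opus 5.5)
This is the standard Zinkevich regret bound for projected online gradient ascent, applied separately to the two decoupled sequences $\{\psi_k\}$ on $\Theta_\epsilon$ and $\{q_k\}$ on $\mathcal M$. I will prove a generic statement and then instantiate it twice. Let $\mathcal K$ be a compact convex set of diameter $D$. Let $f_1,\dots,f_K$ be concave on $\mathcal K$ with supergradients $g_k\in\partial f_k(x^k)$ satisfying $\|g_k\|\le G$. Let $x^{k+1}=\Pi_{\mathcal K}(x^k+\eta g_k)$. Fix any comparator $x^\star\in\mathcal K$.

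\textbf{Step 1: linearize.} By concavity,
\[
f_k(x^\star)-f_k(x^k)\le \langle g_k,x^\star-x^k\rangle .
\]
It therefore suffices to bound the linearized regret $\sum_k\langle g_k,x^\star-x^k\rangle$.

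\textbf{Step 2: per-round inequality from projection.} Projection onto a convex set is nonexpansive and $x^\star\in\mathcal K$, so
\[
\|x^{k+1}-x^\star\|^2\le\|x^k+\eta g_k-x^\star\|^2=\|x^k-x^\star\|^2+2\eta\langle g_k,x^k-x^\star\rangle+\eta^2\|g_k\|^2 .
\]
Rearranging gives
\[
\langle g_k,x^\star-x^k\rangle\le \frac{\|x^k-x^\star\|^2-\|x^{k+1}-x^\star\|^2}{2\eta}+\frac{\eta G^2}{2}.
\]

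\textbf{Step 3: telescope and tune.} Summing over $k$, the distance terms telescope to at most $\|x^1-x^\star\|^2/(2\eta)\le D^2/(2\eta)$. This gives
\[
\max_{x^\star\in\mathcal K}\sum_k f_k(x^\star)-\sum_k f_k(x^k)\le \frac{D^2}{2\eta}+\frac{\eta G^2K}{2}.
\]
Plugging in $\eta=D/(G\sqrt K)$ gives $\tfrac12 DG\sqrt K+\tfrac12 DG\sqrt K=DG\sqrt K$.

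\textbf{Step 4: instantiate.} For the prompt-level duals, take $\mathcal K=\Theta_\epsilon$, a box and hence convex and compact, with $D_\theta=\big(\sum_i (c_i-\epsilon)^2\big)^{1/2}$ and $G=G_\theta$. Here $\psi_k$ is concave because $U_i^*$ is a concave conjugate, being an infimum of affine functions of $\theta_i$, and the term $-\theta_iN_{i,k}$ is linear. For the budget dual, take $\mathcal M=[0,\bar\mu]$ with $D_\mu=\bar\mu$; $q_k$ is linear in $\mu$, hence concave, and $G=G_\mu$. Both bounds are $O(\sqrt K)$ once $D$ and $G$ are treated as constants independent of $K$.

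The only point needing care is this last check that $G_\theta,G_\mu$ really are uniform. On $\Theta_\epsilon$ we have
\[
\frac{d}{d\theta_i}U_i^*(\theta_i)=\frac{1}{c_i}\log\frac{c_i}{\theta_i}\in\Big[0,\tfrac{1}{c_i}\log\tfrac{c_i}{\epsilon}\Big],
\]
and $0\le N_{i,k}\le\Nmax$, so $G_\theta\le \sqrt M\big(\Nmax+\max_i \tfrac{1}{Kc_i}\log\tfrac{c_i}{\epsilon}\big)$. This is exactly why the truncation to $\epsilon>0$ is needed. For the budget dual, $|\nabla q_k|=\big|\sum_iN_{i,k}-b\big|\le M\Nmax+b$. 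Since the lemma assumes these bounds directly, this check is only a remark on the hypotheses, and the proof reduces to Steps 1 to 3.
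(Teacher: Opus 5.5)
Your proposal is correct and follows the paper's proof essentially step for step: nonexpansiveness of the projection, rearrangement into a per-round bound, the supergradient inequality from concavity, telescoping to $D^2/(2\eta)+\eta G^2K/2$, and the step size $\eta=D/(G\sqrt K)$. The differences are cosmetic. You prove the bound once for a generic compact convex set and instantiate it for $\Theta_\epsilon$ and $\mathcal M$, whereas the paper writes the argument out twice; your closing check of $G_\theta$ and $G_\mu$, including why $\epsilon>0$ keeps $\frac{1}{c_i}\log\frac{c_i}{\theta_i}$ bounded, is a useful addition the paper omits.
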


According to \Cref{lem:oco-dual-regret}, we can immediately obtain:
\[
\Reg = \OPT-\ALG=O(\sqrt K).
\]
This completes the proof.

\section{Proof of \Cref{lem:oco-dual-regret}}
\begin{proof}
We prove the two regret bounds using the standard analysis of projected online gradient ascent. The proof is included for completeness.

First consider the prompt-level dual variables. Let
\[
\Theta_\epsilon=\prod_{i=1}^M[\epsilon,c_i],
\]
and define its diameter
\[
D_\theta :=\sup_{\theta,\theta'\in\Theta_\epsilon}\|\theta-\theta'\|_2.
\]
The projected gradient ascent update is
\[
\theta^{k+1}=\Pi_{\Theta_\epsilon}\left(\theta^k+\eta_\theta\nabla\psi_k(\theta^k)\right).
\]
Fix any comparator \(\theta\in\Theta_\epsilon\). By non-expansiveness of Euclidean projection,
\[
\begin{aligned}
\|\theta^{k+1}-\theta\|_2^2
&\le\left\|\theta^k+\eta_\theta\nabla\psi_k(\theta^k)-\theta\right\|_2^2 \\
&=\|\theta^k-\theta\|_2^2+2\eta_\theta\left\langle\nabla\psi_k(\theta^k),\theta^k-\theta\right\rangle+\eta_\theta^2\|\nabla\psi_k(\theta^k)\|_2^2 .
\end{aligned}
\]
Rearranging gives
\[
\left\langle\nabla\psi_k(\theta^k),\theta-\theta^k\right\rangle\le\frac{\|\theta^k-\theta\|_2^2-\|\theta^{k+1}-\theta\|_2^2}{2\eta_\theta}+\frac{\eta_\theta}{2}\|\nabla\psi_k(\theta^k)\|_2^2 .
\]
Since \(\psi_k\) is concave, we have
\[
\psi_k(\theta)-\psi_k(\theta^k)\le\left\langle\nabla\psi_k(\theta^k),\theta-\theta^k\right\rangle .
\]
Combining the preceding two inequalities and using \(\|\nabla\psi_k(\theta^k)\|_2\le G_\theta\), we obtain
\[
\psi_k(\theta)-\psi_k(\theta^k)\le\frac{\|\theta^k-\theta\|_2^2-\|\theta^{k+1}-\theta\|_2^2}{2\eta_\theta}+\frac{\eta_\theta G_\theta^2}{2}.
\]
Summing over \(k=1,\ldots,K\), the squared-distance terms telescope:
\[
\begin{aligned}
\sum_{k=1}^K\left[\psi_k(\theta)-\psi_k(\theta^k)\right]
&\le\frac{\|\theta^1-\theta\|_2^2}{2\eta_\theta}+\frac{\eta_\theta G_\theta^2K}{2} \\
&\le\frac{D_\theta^2}{2\eta_\theta}+\frac{\eta_\theta G_\theta^2K}{2}.
\end{aligned}
\]
Choosing
\[
\eta_\theta=\frac{D_\theta}{G_\theta\sqrt K}
\]
yields
\[
\sum_{k=1}^K\left[\psi_k(\theta)-\psi_k(\theta^k)\right]\le D_\theta G_\theta\sqrt K.
\]
Since this holds for every \(\theta\in\Theta_\epsilon\), we have
\[
\max_{\theta\in\Theta_\epsilon}\sum_{k=1}^K\psi_k(\theta)-\sum_{k=1}^K\psi_k(\theta^k)\le D_\theta G_\theta\sqrt K.
\]

We next prove the corresponding bound for the global budget multiplier. Let
\[
\mathcal M=[0,\bar\mu],
\quad
D_\mu:=\sup_{\mu,\mu'\in\mathcal M}|\mu-\mu'|=\bar\mu .
\]
The projected gradient ascent update is
\[
\mu^{k+1}=\Pi_{\mathcal M}\left(\mu^k+\eta_\mu\nabla q_k(\mu^k)\right).
\]
Fix any comparator \(\mu\in\mathcal M\). By non-expansiveness of projection,
\[
\begin{aligned}
|\mu^{k+1}-\mu|^2
&\le\left|\mu^k+\eta_\mu\nabla q_k(\mu^k)-\mu\right|^2 \\
&=|\mu^k-\mu|^2+2\eta_\mu\nabla q_k(\mu^k)(\mu^k-\mu)+\eta_\mu^2|\nabla q_k(\mu^k)|^2 .
\end{aligned}
\]
Rearranging gives
\[
\nabla q_k(\mu^k)(\mu-\mu^k)\le\frac{|\mu^k-\mu|^2-|\mu^{k+1}-\mu|^2}{2\eta_\mu}+\frac{\eta_\mu}{2}|\nabla q_k(\mu^k)|^2 .
\]
Since \(q_k\) is concave,
\[
q_k(\mu)-q_k(\mu^k)\le\nabla q_k(\mu^k)(\mu-\mu^k).
\]
Using the uniform gradient bound
\[
|\nabla q_k(\mu^k)|\le G_\mu,
\]
we get
\[
q_k(\mu)-q_k(\mu^k)\le\frac{|\mu^k-\mu|^2-|\mu^{k+1}-\mu|^2}{2\eta_\mu}+\frac{\eta_\mu G_\mu^2}{2}.
\]
Summing over \(k=1,\ldots,K\) gives
\[
\begin{aligned}
\sum_{k=1}^K\left[q_k(\mu)-q_k(\mu^k)\right]
&\le\frac{|\mu^1-\mu|^2}{2\eta_\mu}+\frac{\eta_\mu G_\mu^2K}{2} \\
&\le\frac{D_\mu^2}{2\eta_\mu}+\frac{\eta_\mu G_\mu^2K}{2}.
\end{aligned}
\]
Choosing
\[
\eta_\mu=\frac{D_\mu}{G_\mu\sqrt K}
\]
yields
\[
\sum_{k=1}^K\left[q_k(\mu)-q_k(\mu^k)\right]\le D_\mu G_\mu\sqrt K.
\]
Since this holds for every \(\mu\in\mathcal M\), we obtain
\[
\max_{\mu\in\mathcal M}\sum_{k=1}^K q_k(\mu)-\sum_{k=1}^K q_k(\mu^k)\le D_\mu G_\mu\sqrt K.
\]
Therefore,
\[
\mathrm{Reg}_\theta(K)=O(\sqrt K),
\quad
\mathrm{Reg}_\mu(K)=O(\sqrt K).
\]
This completes the proof.
\end{proof}

\section{Experimental Details}
\label{app:exp-details}
In this section, we provide the training details for all four LLMs used in our experiments. All models are trained with \textsc{CERO} for $500$ optimization steps using the same OGD update rule. We use H800 GPUs for all runs: the 1.5B and 4B models are trained on $4$ GPUs, while the 7B model is trained on $8$ GPUs. 
R1-Distill-1.5B runs about $90$ GPU hours, Qwen3-4B runs about $170$ GPU hours and Qwen2.5-Math-7B runs about $200$ GPU hours.

We report OGD-related hyperparameters in \Cref{tab:experimental-details}. Across all models, we initialize $\mu$ and $\theta$ to $10^{-6}$ and $10^{-7}$, respectively. The OGD hyperparameter $\eta$ is set to $2 \times 10^{-3}$ for R1-Distill-1.5B and $1 \times 10^{-3}$ for the remaining models. We tune the learning rates for $\mu$ and $\theta$ separately for each model while keeping the remaining training protocol fixed.

\begin{table}[htbp]
\centering
\caption{
Training details for the four trained LLM.
}
\label{tab:experimental-details}
\resizebox{\linewidth}{!}{
\begin{tabular}{lcccccccc}
\toprule
Model
& average rollout $N$
& maximum rollout $\Nmax$
& $\mu$ learning rates
& $\theta$ learning rates
& Initial $\mu$
& Initial $\theta$
& $\eta$
\\
\midrule
R1-Distill-1.5B &  8 & 16 & 0.00003 & 0.00004 & 0.000001 & 0.0000001 & 0.002 \\
Qwen3-4B-base & 8 & 16 & 0.00005 & 0.00005 & 0.000001 & 0.0000001 & 0.001 \\
Qwen3-4B-Instruct& 8 & 16 & 0.00005 & 0.00005 & 0.000001 & 0.0000001 & 0.001 \\
Qwen2.5-Math-7B& 8 & 16 & 0.0001 & 0.00008 & 0.000001 & 0.0000001 & 0.001 \\
\bottomrule
\end{tabular}
}
\end{table}

\end{APPENDICES}

\end{document}